\newtheorem{theorem}{Theorem}
\newtheorem{definition}{Definition}%
\begin{document}

\title[Article Title]{Reduced Implication-bias Logic Loss for Neuro-Symbolic Learning}

\author[1]{\fnm{Hao-Yuan} \sur{He}\textsuperscript{\href{https://orcid.org/0000-0002-8433-4782}{\includegraphics[scale=0.02]{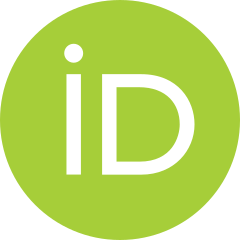}}}}\email{hehy@lamda.nju.edu.cn}

\author[1]{\fnm{Wang-Zhou} \sur{Dai}}\email{daiwz@lamda.nju.edu.cn}

\author*[1]{\fnm{Ming} \sur{Li}}\email{lim@lamda.nju.edu.cn}

\affil*[1]{\orgname{Nanjing University}, \orgaddress{\city{Nanjing}, \postcode{210023}, \country{China}}}

\abstract{
    Integrating logical reasoning and machine learning by approximating logical inference with differentiable operators is a widely used technique in the field of Neuro-Symbolic Learning.
    However, some differentiable operators could introduce significant biases during backpropagation, which can degrade the performance of Neuro-Symbolic systems.
    In this paper, we demonstrate that the loss functions derived from fuzzy logic operators commonly exhibit a bias, referred to as \emph{Implication Bias}.
    To mitigate this bias, we propose a simple yet efficient method to transform the biased loss functions into \emph{Reduced Implication-bias Logic Loss (RILL)}.
    Empirical studies demonstrate that RILL outperforms the biased logic loss functions, especially when the knowledge base is incomplete or the supervised training data is insufficient.
}

\keywords{Implication Bias, Neuro-Symbolic Learning, Neural Networks, Machine Learning}

\maketitle

\section{Introduction}

Neuro-Symbolic (NeSy) AI~\citep{garcez2019neural,de2020statistical,yang_ste_2022} aims to bridge the gap between neural networks and symbolic reasoning to achieve a more comprehensive form of artificial intelligence. Some researchers attempt to create a hybrid system by developing an interface between the neural and symbolic modules. For instance, \citet{dai_abl_2019} and \citet{zhou2019abductive} introduce the Abductive Learning (ABL) framework, which combines first-order logic with machine learning models and abductive reasoning. Additionally, \citet{manhaeve_deepproblog_2018} propose DeepProbLog, which integrates probabilistic logic programming with deep learning through neural predicates.

Training hybrid models can be challenging due to the complexity of jointly optimizing the neural and symbolic modules. 
To address this challenge, some researchers have proposed approximating logical reasoning with differentiable operators~(e.g., fuzzy operators) in order to transform symbolic knowledge into loss functions~\citep{xu_semantic_2018,giannini2019relation,van_krieken_survey_2022}. 
Models can then be trained using gradient descent to improve the efficiency of the training process.

Despite the benefits in efficiency, there might remain some problems when approximating the discrete logical calculations naively~\citep{van_krieken_survey_2022}.
When approximating implication rules such as $p_1\land p_2\land\cdots\land p_n \rightarrow q$ with differentiable operations, a phenomenon that we called \textit{Implication Bias} could degrade the model performance. Informally, this rule could be satisfied via vacuous truth~\citep{DBLP:books/daglib/0076838}, i.e., by negating the premises $p_1\land p_2\land\cdots\land p_n$.
As shown in \cref{fig:ib}, NeSy systems can increase the consistency between the predictions and their knowledge base by negating the premises of implication rules.

Suppose the NeSy system is trained under optimal conditions, such as having access to a complete knowledge base that can detect these shortcut errors or having sufficient supervised training data that can teach the model right from wrong. 
In that case, the system can fortunately avoid introducing biases during training.
However, real-world problems do not always have idealistic conditions such as a complete knowledge base or sufficient supervised data. 
In such cases, implication bias can have a significant negative impact, leading to shortcuts in the training process. 
Additionally, \citet{DBLP:journals/natmi/GeirhosJMZBBW20} note that neural networks have a tendency to fit these shortcuts during training. 
As a result, the system may prioritize satisfying the knowledge base rather than making accurate predictions, leading to suboptimal outcomes.

\begin{figure}
    \centering
    \includegraphics[width=0.8\linewidth]{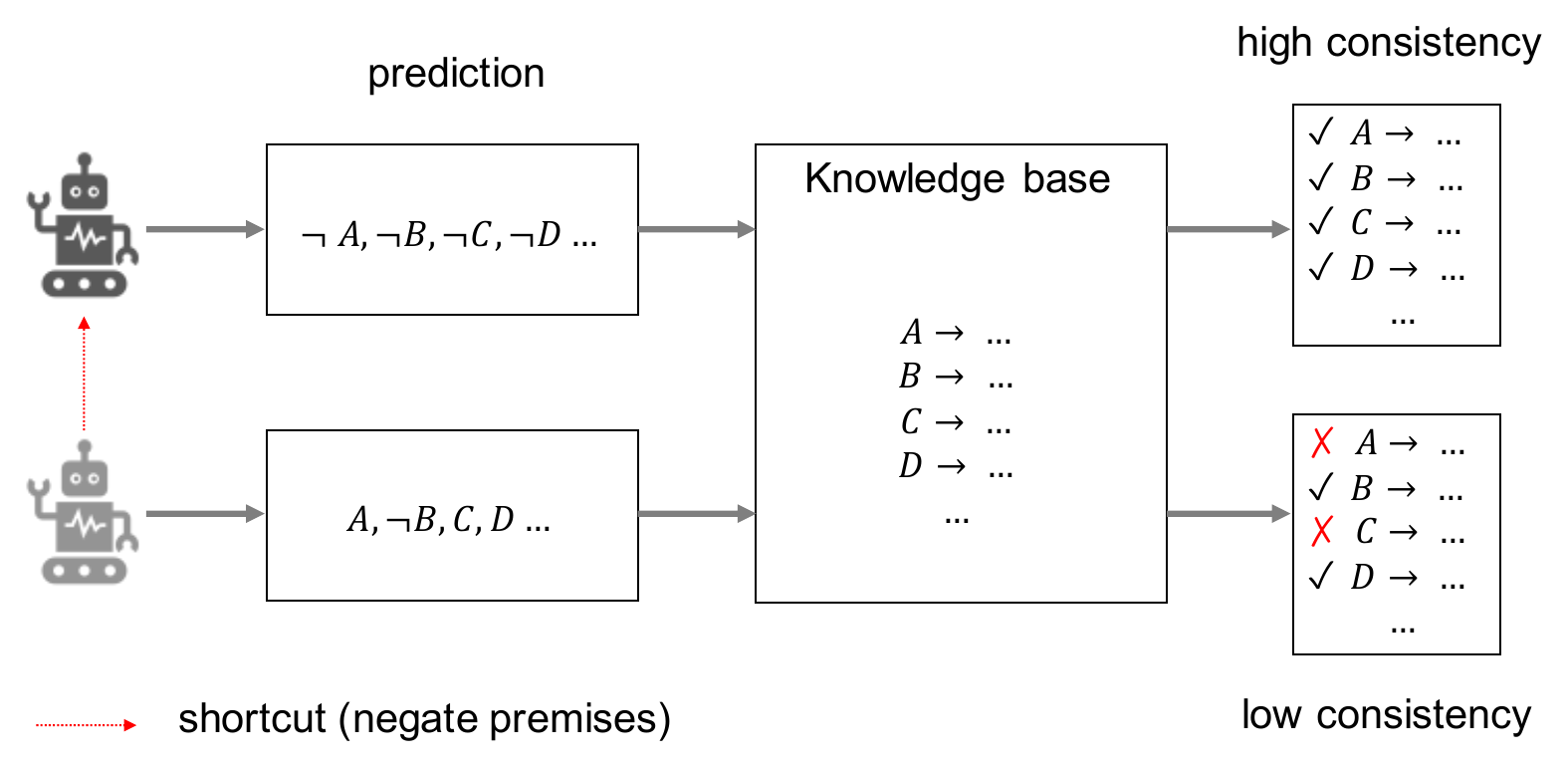}
    \captionsetup{font=small, labelfont=bf}
    \caption{The implication bias is a tendency for NeSy systems to negate the premises of implication rules in order to increase consistency with the knowledge base. }
    \label{fig:ib}
\end{figure}

\bmhead{Organization}
The organization of this paper is as follows. We begin by introducing the background of the field in \cref{sec:preliminary}.
In \cref{sec:ib}, we present a formal definition of \emph{implication bias} and provide a case study to explain this phenomenon.
Our analysis demonstrates that implication bias is prevalent in NeSy logic loss functions derived from fuzzy operators. 
To address this issue, in \cref{sec:rill}, we propose a simple yet effective method, Reduced Implication-bias Logic Loss (RILL). 
Finally, we empirically validate the effectiveness of RILL in \cref{sec:exp} by examining two challenging scenarios: the incomplete knowledge base and insufficient supervised data that we discussed earlier.
This paper's two main contributions are summarized as follows:

\begin{itemize}
    \item We analyze the phenomenon of implication bias caused by differentiable implication operators and identify the loss functions that are susceptible to this bias.
    \item We propose a simple yet effective method called \emph{Reduced Implication-bias Logic Loss (RILL)} to reduce implication bias. Empirical studies show that RILL can achieve significant improvements compared to other forms of logic loss, especially when the knowledge base is incomplete or labeled data is insufficient.
\end{itemize}

\section{Related Work}
Some researchers have attempted to design the structure of neural networks based on logical rules~\citep{towell_KBANN_1994,li_augmenting_2019,MultiplexNet}.
These structures embedded with logical constraints can perform well in specific tasks and satisfy their logic constraints.
However, training such models requires a large amount of data.

Statistical Relational and Neural-Symbolic methods, such as DeepProbLog~\citep{manhaeve_deepproblog_2018}, TensorLog~\citep{cohen2020tensorlog}, and Abductive Learning (ABL)~\citep{zhou2019abductive,dai_abl_2019}, have been proposed to combine neural networks and logical programming.
However, many of them attempt to approximately perform logical reasoning using distributed representations in neural networks, which typically require a tremendous amount of labeled data.
Therefore, for simplicity, we do not consider differentiable operators implemented by parameter models, where a neural network acts as a logical reasoner.
Although the use of neural networks for logical reasoning seems promising, it is beyond the scope of this paper.

A simple way to interact between the logical module and the perception module is to design a logic loss that measures how unsatisfying the model's output is. As pointed out by \citet{DBLP:journals/corr/abs-2108-11451}, there are two perspectives on approximating logical reasoning with differentiable operations. From the perspective of probabilistic logic, \cite{xu_semantic_2018} proposed a semantic loss function to obtain a logic loss.
However, for efficient computation, they need to encode logical rules in Sentential Decision Diagrams (SDD)~\citep{DBLP:conf/ijcai/Darwiche11}, which is an NP-hard problem and expensive for most real-world tasks. 
From the perspective of fuzzy logic, it is natural to use fuzzy operators to design logic losses, as demonstrated by several prior works~\citep{DL2,roychowdhury_regularizing_2021,giannini2019relation,LTN}. These works use fuzzy logic to translate logical rules into loss functions.
Our work is closely related to \cite{van_krieken_survey_2022}, which analyzes different kinds of fuzzy logic operators and discovers a significant imbalance of gradients between premises and consequents in the Reichenbach operator.

\section{Preliminaries}\label{sec:preliminary}
To provide context for our proposed method, we first present some concepts in logic programming.
Next, we introduce the formal definition of continuous-valued logic, which forms the foundation for fuzzy-based logic loss in NeSy systems.
Finally, we explain how logic loss is integrated with task-specific loss in NeSy systems.

\subsection{Closed World Assumption}
The closed-world assumption (CWA)~\citep{Reiter1978} is a presumption that states if something is known to be true, it is considered true. If something is not currently known to be true, it is considered false. This is in contrast to the open-world assumption (OWA)~\citep{OWA}, which holds that lack of knowledge does not imply falsity. The CWA is often used in database contexts, where information not explicitly presented in the database is assumed to be false.
For example:
\begin{quote}
    \centering
    \begin{tabular}{rl}
        Statement:           & Bob and Cam are students. \\
        Question:            & Is Alice a student?       \\
        Closed world answer: & No.                       \\
        Open world answer:   & Unknown.
    \end{tabular}
\end{quote}

\bmhead{Clark's Completion}
Clark's completion aims to address the problem caused by negating premises.
By Clark's Completion, the following knowledge base:
\begin{equation*}
    A\leftarrow A_1, A\leftarrow A_2,\cdots, A\leftarrow A_n,
\end{equation*}
can be rewritten as $A\leftrightarrow (A_1\lor A_2\lor \cdots \lor A_n)$,
thus implementing the closed-world assumption and keeping the soundness of inference~\citep{clark1978negation}.

\subsection{Continuous-valued Logic}
Propositions are denoted as lowercase letters $x,y,z,\dots$, and propositional literals (i.e., $x, \neg x$) stand for $x$ being true or false. A \textit{term} is a variable, a constant, or a function applied to terms. An \textit{atom} (or atomic formula) is either a proposition or a predicate  $p(t_1,\cdots, t_k)$ of arity k where the $t_i$ are terms.
A \textit{formula} is built out of atoms using logical connectives $\neg, \lor, \land, \rightarrow$, respectively. For simplicity, we assume that first-order-logic (FOL) formulas are universally quantified.

In this paper, we focus specifically on the problem caused by implication operators. 
Therefore, we will not discuss other types of operators for the sake of simplicity.

\begin{definition}[Implication Likelihood]
    An implication likelihood, or fuzzy implication, is a function $I(\cdot,\cdot): [0,1]^2\mapsto[0,1]$ that is differentiable and satisfies the following conditions: $I(0,0) = 1$, $I(1,1) = 1$, and $I(1,0) = 0$.
\end{definition}

Implication likelihood is a function that is used to estimate the truth-value of implication rules, which is closely related to fuzzy t-norms~\citep{giannini2019relation}.

\begin{definition}[Logic Likelihood]
    A logic likelihood is a function $s(\cdot,\cdot):\mathbb{F}\times [0,1]^n \mapsto [0,1],$ that estimates the truth value of logic formulas.
\end{definition}

The function $s(\cdot,\cdot)$ takes a logic formula and the corresponding truth-value vector as inputs\footnote{Limited by space, this definition is not complete, the more rigorous definition can be referred to~\cite{klement2013triangular}}. For instance, $s(P\rightarrow Q,(x,y)) = I(x,y)$ evaluates the truth value of the rule $P\rightarrow Q$, where $x$ and $y$ represent the truth values of $P$ and $Q$, respectively.

\begin{definition}[$\delta$-Confidence Monotonic] A logic likelihood is said to be $\delta$-confidence monotonic if its implication likelihood satisfies the following property: for some $y \in [0,\delta)$, $I(x,y)$ is monotonically decreasing with respect to $x$.
\end{definition}

To better understand this definition, let us consider the implication rule $\text{Raven}(x)\rightarrow \text{Black}(x)$. When the truth value of $\text{Black}(x)$ is almost 0 (i.e., $x$ is highly unlikely to be black), the truth value of this rule decreases as the confidence in $\text{Raven}(x)$ increases.

Functions with this property are commonly used in many publications~\citep{LTN,roychowdhury_regularizing_2021,van_krieken_survey_2022}.
For instance, the implication likelihood used in~\cite{LTN,roychowdhury_regularizing_2021} is defined as $I(x,y) = 1-x+x\cdot y$, which is a $1$-confidence monotonic implication likelihood and is widely used in fuzzy-based NeSy systems.
Furthermore, most fuzzy operators that serve as implication likelihoods exhibit this property, which we have shown in the appendix.

\subsection{Logic Loss in NeSy Systems}
\begin{definition}[Logic Loss]
    The logic loss $\ell_{\mathrm{logic}}$ is a function that estimates the inconsistency between a logic formula $r$ and a truth-value vector $\boldsymbol{z}$. It is defined as:
    \begin{equation*}
        \ell_{\mathrm{logic}}(r,\boldsymbol{z}) = g(s(r,\boldsymbol{z})),
    \end{equation*}
\end{definition}
where $g$ is a monotonically decreasing function with $g(1) = 0$.
For example, $g$ can be a negative logarithm function , such as $\ell_{\mathrm{logic}}(r,\boldsymbol{z}) = 1-\log(s(r,\boldsymbol{z})+1)$. In this paper, this function will be chosen as the default.

After we defined the individual logic loss for a single logical formula, the whole logic loss on the knowledge base and dataset (i.e. empirical logical risk) can be expressed as follows,

\begin{align*}
    R_{\mathrm{logic}}(f) = \mathbb{E}_{\mathcal{X}\times\mathrm{KB}}[\ell_{\mathrm{logic}}(r,f(x)],
\end{align*}

In addition to the logic loss, we also need to optimize the loss for a specific task:

\begin{align*}
    R_{\mathrm{task}}(f) = \mathbb{E}_{\mathcal{X}\times\mathcal{Y}}[\ell_{\mathrm{task}}(f(x), y)].
\end{align*}

We can combine these two parts to optimize:

\begin{align*}
    f^* = \arg\max_{f}~[R_{\mathrm{task}}(f) + \lambda\cdot R_{\mathrm{logic}}(f)],
\end{align*}

where $\lambda$ is a hyper-parameter of the training process.

Combining a logic loss with a task-specific loss in machine learning can be mutually beneficial. On the one hand, the task-specific loss can guide the model to achieve good performance and output valid logic primitive facts. On the other hand, the logic loss can help reduce the empirical optimal space and make the model easier to optimize.

However, as we will demonstrate in the next section, applying a logic loss with implication bias can introduce a significant bias into NeSy systems during the training process, which can make the optimization process more challenging.

\section{Implication Bias}\label{sec:ib}
In this section, we will first give a formal definition of implication bias and discuss which types of logic loss are susceptible to this bias.
We will then present a case study to illustrate this bias more clearly.
Finally, we will further discuss implication bias from two perspectives.
\subsection{Definition}
\begin{definition}[Implication Biased]
    A logic loss is said to be implication biased if there exists a small $\delta > 0$ such that its implication likelihood $I(x,y)$ satisfies $\forall x\in(0,\delta), \exists y\in [0,1],$

    \begin{equation*}
        \begin{aligned}
            \frac{\partial I}{\partial x} < 0.
        \end{aligned}
    \end{equation*}

\end{definition}

Using this definition, we can state the following theorem.

\begin{theorem}
    A logic loss $\ell_{\mathrm{logic}}$ that uses a $\delta$-confidence monotonic logic likelihood is implication biased.
\end{theorem}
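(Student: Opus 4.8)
The plan is to unwind both definitions and observe that the $\delta$-Implication-Biased condition is, up to a domain restriction, literally the first clause of the $\delta$-Confidence-Monotonic condition. First I would propose the witness $\delta' = \delta$. Take any $x,y$ with $0 < x,y < \delta$. Then the second argument satisfies $y \in [0,\delta)$, so the first bullet in the definition of $\delta$-Confidence Monotonicity applies: the section $x \mapsto I(x,y)$ is monotonically decreasing on its whole domain, and in particular on $(0,\delta)$. Since $I$ is differentiable by the definition of Implication Likelihood, this monotonicity is recorded by the first-order condition $\partial I/\partial x \le 0$ on that region. The only structural fact being used is the containment $(0,\delta)\times(0,\delta) \subseteq [0,1]\times[0,\delta)$, together with differentiability turning a monotonicity statement into a sign condition on the gradient.

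The one step that needs care — and the only real obstacle — is upgrading the weak inequality $\partial I/\partial x \le 0$ implied by ``monotonically decreasing'' to the strict inequality $\partial I/\partial x < 0$ demanded of a $\delta$-Implication-Biased loss. I would handle this by reading ``monotonically decreasing'' in the hypothesis as \emph{strict} decrease, exactly as in the running example $I(x,y) = 1-x+xy$, whose $x$-derivative is $y-1 < 0$ for every $y < 1$; strict decrease of a differentiable section forbids $\partial I/\partial x$ from vanishing on any subinterval. If one wants to be scrupulous about possible isolated critical points, it is enough to pass to a slightly smaller $\delta'' \le \delta$ on which the section is strictly decreasing with nowhere-vanishing derivative; producing such a $\delta''$ is precisely what the clause ``there exists some $\delta\in(0,1]$'' in the definition of ``implication biased'' permits.

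With the sign of the derivative established, the conclusion is immediate: $\partial I/\partial x < 0$ holds for all $x,y \in (0,\delta')$, which is exactly the defining inequality of a $\delta'$-Implication-Biased loss, so $\ell_{\mathrm{logic}}$ is implication biased. I expect the final write-up to be only a few lines — the substance is entirely in noticing that the ``body-confidence decreases satisfiability'' half of $\delta$-Confidence Monotonicity already contains the ``vacuous-truth shortcut has negative gradient in the premise'' statement, with differentiability bridging the zeroth-order and first-order formulations.
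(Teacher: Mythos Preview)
Your proposal is correct and follows essentially the same route as the paper's own proof: both simply observe that for $y\in[0,\delta)$ the first clause of $\delta$-Confidence Monotonicity gives $x\mapsto I(x,y)$ decreasing, and then invoke differentiability to pass to the sign condition $\partial I/\partial x<0$. If anything you are more careful than the paper, which jumps directly from ``monotonically decreasing'' to the strict inequality without pausing over the $\le$ versus $<$ issue you flag.
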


The proof follows directly from Definition 3. It is worth noting that logic likelihoods with $\delta$-confidence monotonicity are frequently used in many fuzzy-based NeSy systems (as we will demonstrate in the appendix), making this property pervasive in NeSy systems.
\subsection{Case Study}

\begin{figure}[!htb]
    \centering
    \includegraphics[width=.50\linewidth]{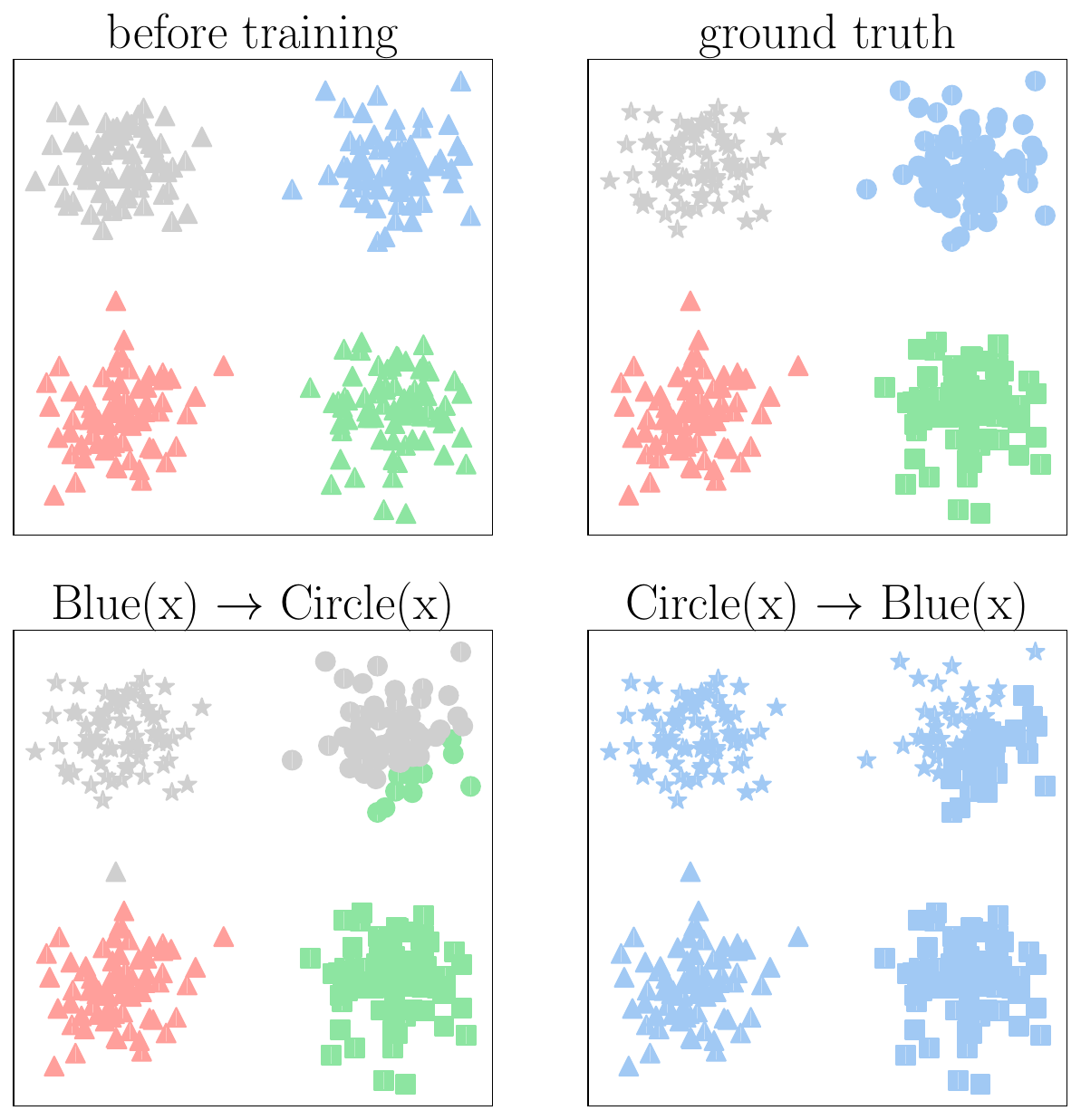}
    \captionsetup{font=small, labelfont=bf}
    \caption{Case study of implication bias. These logic rules are both satisfied their logical constraints.}
    \label{fig:bias_exp}
\end{figure}

This case is designed to illustrate the impact of implication bias during the training process.
The dataset is constructed from four clusters. Each data point in this dataset has two attributes:
\begin{quote}
    \centering
    \begin{tabular}{rll}
        color & $\in$ & $\{\text{Blue, Green, Red, Gray}\}$          \\
        shape & $\in$ & $\{\text{Circle, Square, Triangle, Star}\}.$ \\
    \end{tabular}
\end{quote}
The attributes in our dataset are related to others according to the following logic rules: $\text{Blue}(x)\leftrightarrow \text{Circle}(x),\text{Green}(x)\leftrightarrow \text{Square}(x),\text{Gray}(x)\leftrightarrow \text{Star}(x),\text{Red}(x)\leftrightarrow \text{Triangle}(x)$.

The training set only contains the shape labels. The two different incomplete knowledge bases used in our experiments, shown in \cref{fig:bias_exp}, are ${\text{Blue}(x)\rightarrow \text{Circle}(x)}$ and ${\text{Circle}(x)\rightarrow \text{Blue}(x)}$.

For the task-specific loss, we use CrossEntropy, and for the logic loss, we use the Reichenbach Implication Likelihood, defined as $I(x,y) = 1-x+x\cdot y$. The optimization objective is $R_{\mathrm{task}}(f)+\lambda\cdot R_{\mathrm{logic}}(f)$. The learning model is a one-hidden-layer neural network with two linear classification heads for shape and color.

As shown in the bottom left and right images of \cref{fig:bias_exp}, when the logic loss function is implication biased, the optimized model will avoid predicting any samples as blue, despite the initial model correctly predicting the color labels.
This happens because the majority of the samples in the training set are unrelated to the single implication rule (i.e., they are neither blue nor circular).
These samples will contribute significantly to the gradient towards negating the premise of the implication rule, causing the model to avoid predicting samples as blue.

\begin{figure}[!htb]
    \centering
    \includegraphics*[width=.8\linewidth]{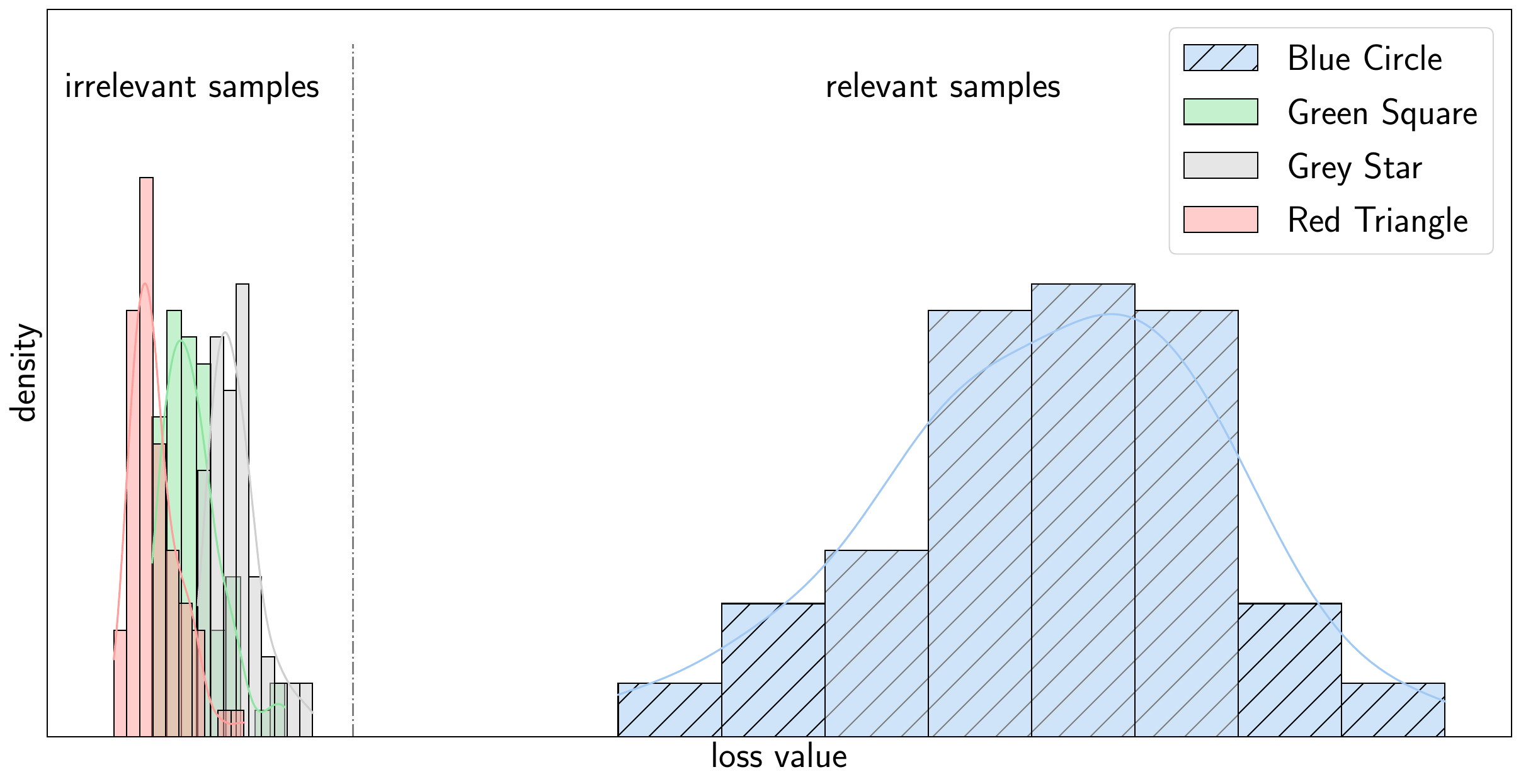}
    \captionsetup{font=small, labelfont=bf}
    \caption{Distribution of different samples given by the loss values, which are computed from rule $\text{Circle}(x)\rightarrow \text{Blue}(x)$.\label{fig:loss_dist}}
\end{figure}

As can be seen in \cref{fig:loss_dist}, there is a significant difference in the distribution of logic loss between relevant and irrelevant samples for the rule $\text{Circle}(x)\rightarrow \text{Blue}(x)$. Even though the loss for irrelevant samples is relatively small, they still contribute a non-negligible amount to the gradient, which can negate the premise of the implication rule.

\bmhead{Remark}
It may be surprising that our model failed to correctly predict the shape labels in the bottom right image of \cref{fig:bias_exp}. This can happen because the logic loss function tends to favor negating the premise (i.e., vacuous truth caused by the implication bias), while the task-specific loss function is designed to favor the data or the goals of the task. As a result, the NeSy system may prioritize satisfying the logic loss function over achieving the goals of the task, leading to sub-optimal performance.

\subsection{Analysis of Implication Bias}
Let us analyze implication bias from two perspectives.
\bmhead{Optimization}
The implication bias in neural networks can hinder the learning process by causing the networks to disregard the premise of implication rules, which are often used as shortcuts in the learning process. This bias may cause the network to perform sub-optimally.
In many cases, a significant portion of the samples in a dataset does not meet the premise of a particular implication rule but rather satisfy the rule through vacuous truth (i.e., by negating the premise)~\citep{DBLP:books/daglib/0076838}. When the logic loss is implication biased, these samples can still significantly contribute to the gradient, exacerbating the tendency to negate the premise of the implication rule and leading to even more biased results.

\bmhead{Logic Programming}
In an open world, negating a model's prediction is undecidable. For instance, consider the statement $\textrm{Raven}(x)\rightarrow \textrm{Black}(x)$. In this case, there are many possible predicates that could be “not raven,” making it difficult to identify an explicit target for improving the model's prediction.

\bmhead{Remark} Implication bias poses a significant challenge for NeSy systems, particularly in the following two common and realistic scenarios:
\begin{itemize}
    \item \emph{Incomplete knowledge base}: In this case, there is missing or uncertain information that is necessary for making accurate inferences. This can make it difficult for the NeSy system to accurately approximate logical reasoning using differentiable operators.
    \item \emph{Insufficient supervised data}: In this case, there is a small amount of labeled data available for training the NeSy system. This can make it difficult for the NeSy system to accurately learn from the data and make accurate predictions.
\end{itemize}

In the above cases, if the NeSy system is using loss functions that are susceptible to implication bias, this can further compound the problem. The bias introduced by these loss functions can cause the model to make inaccurate predictions and degrade its performance.

\section{Reduce Implication Bias Logic Loss~(RILL)}\label{sec:rill}
One possible approach to addressing the implication bias problem is to utilize a likelihood function that does not exhibit this bias.
However, discovering such a function can be challenging and require extensive research.
In this section, we introduce a simpler solution for reducing implication bias.
We will begin by providing an overview of our approach, followed by a detailed explanation of our method and discussions for improved comprehension.

\bmhead{Insight}
\emph{Samples with low confidence in their premise should be assigned less importance:}
\begin{itemize}
    \item Samples with low confidence in their premise tend to have lower loss values, which means that they are less important to the optimization objective compared to other samples. Furthermore, these samples' gradients can introduce a nuisance into the training process, so it is natural to reduce their importance.
    \item Samples with low confidence in their premise can introduce undecidability into the training process. Clark's completion attempts to handle the undecidability of negative predicates by utilizing information explicitly present in the knowledge base. Therefore, it is natural to reduce the importance of information that is not explicitly present.
\end{itemize}

It is worth noting that low-loss value samples consist of two types. 
The first type includes samples that satisfy a specific rule and do not require backward tuning of the model's parameters, leading to a low loss value. 
The second type comprises samples with low confidence in their premise, which we should assign less importance to.
Therefore, we can consider low-loss value samples as irrelevant samples regarding a specific rule. Based on this, assigning importance to the samples can be straightforward, as we can assign different levels of importance based solely on their loss value.

Consider all samples' individual logic loss on a formula $r$ and rank them by their loss value~(from low to high):

\begin{equation*}
    \underbrace{\ell_1^r, \ell_2^r, \cdots, \ell_t^r}_{\text{weak samples}}, \cdots, \ell_N^r.
\end{equation*}

The ``weak samples'' referred to in the equation are those with low loss values, such as irrelevant samples depicted in \cref{fig:loss_dist}. 
These weak samples are generally considered less reliable than the other samples when applying the given rule. 
To distinguish between weak and non-weak samples, we can use a threshold $\epsilon$ to divide them.

Without losing generality, we can rewrite the risk of logic in the following form:

\begin{equation*}
    R_{\textrm{logic}} = \sum_{r\in\textrm{KB}} \mathcal{A}(\left\{ \ell_i^r | i \in [N] \right\}),
\end{equation*}

where the aggregator $\mathcal{A}$ is typically used to average the losses. One way to reduce bias in the results is to redefine the aggregator to give less weight to weak samples.
Let $\mathcal{L}^r$ be the set of losses $\left\{ \ell_i^r | i \in [N] \right\}$. We propose three possible aggregators as follows.

\bmhead{Hinge}
One option for reducing bias in the results is to use a hinge-like aggregator, which ignores the losses of weak samples.

\begin{equation*}
    \mathcal{A}_{\textit{hinge}}(\mathcal{L}^r) = \textrm{Average}(\left\{ \ell_i^r | \ell_i^r \geq \epsilon, i \in [N] \right\})
\end{equation*}

This aggregator can be viewed as a form of sample selection, as it only includes the loss values of non-weak samples in the computation. It is similar to the RAMP loss function~\citep{phoungphol2012robust}, which also only considers a subset of samples for robust optimization.

\bmhead{L2 Smooth}
Since we want to reduce the bias caused by weak samples' gradients of biased logic loss, it is natural to smooth by multiplying itself.

\begin{equation*}
    \mathcal{A}_{\textit{l2}}(\mathcal{L}^r) = \textrm{Average}(\left\{ (\ell_i^r)^2 | i \in [N] \right\})
\end{equation*}

The aggregator in this method smooths the biased logic loss by multiplying it with itself, giving lower-loss samples a smoother gradient and reducing the bias of weak samples. This can be seen in the equation $\nabla \ell^2 = 2\ell \nabla \ell$.

\bmhead{L2+Hinge} Kind of mixing the above aggregators:

\begin{equation*}
    \begin{aligned}
        \mathcal{A}_{\textit{l2+hinge}}(\mathcal{L}^r) =
        \textrm{Average}(\left\{ \mathcal{T}(\ell_i^r) | i \in [N] \right\}),
    \end{aligned}
\end{equation*}where
\begin{equation*}
    \mathcal{T}(\ell) = \ell^2\cdot\mathbb{I}[\ell \leq \epsilon] + \ell\cdot \mathbb{I}[\ell >\epsilon].
\end{equation*}

\bmhead{Remark}
The solutions described above aim to reweight the samples based on their loss value, which reflects their relevance to an implication rule. While we have presented three specific aggregators for achieving this, it is worth noting that other approaches that similarly aim to reduce the influence of weak samples could also be effective in addressing this issue.

\bmhead{Discussion}
RILL reduces the uncertainty associated with negative information, similar to Clark's completion. While RILL does not alter the information in the knowledge base but instead reduces the importance of weak samples. This reduction causes the model to pay more attention to samples that are more relevant to the given rule.

On the other hand, explicitly applying Clark's completion in a NeSy system may not be helpful. In this setting, replacing $\rightarrow$ with $\leftrightarrow$ would change the information in the knowledge base, potentially leading to incorrect information. While Clark's completion does not affect the soundness of the system in logic programming, it may not be the case in a data-driven approach such as NeSy.
More discussion on the relationship between RILL and Clark's completion can be found in the appendix.

\section{Empirical Study}\label{sec:exp}
In this section, we will discuss two common challenges that occur in real-world scenarios: incomplete knowledge bases and insufficient supervised data. 
We will begin by briefly introducing the compared methods. 
Next, we will set up the experiments.
Finally, we empirically validate the performance of RILL and other compared methods under these settings.

\subsection{Setting Up}
The compared methods come from two mainstream perspectives of NeSy systems. From the fuzzy logic perspective\footnote{A detailed empirical study on other kinds of fuzzy operators can be seen in the appendix.}, the logic loss derived from Reichenbach operators outperforms other methods, as reported in~\cite{van_krieken_survey_2022}. From the probabilistic logic perspective, we chose the semantic loss~(SL for short)~\citep{xu_semantic_2018} due to its well-defined nature and efficiency. The following definition is taken from~\cite{xu_semantic_2018}.

\begin{definition}[Semantic Loss]
    Let $\mathrm{p}$ be a vector of probabilities, one for each variable in $\mathbf{X}$, and let $\alpha$ be a sentence over $\mathbf{X}$. The semantic loss between $\alpha$ and $\mathrm{p}$ is defined as:

    \begin{equation*}
        \mathrm{L}^{\mathrm{s}}(\alpha, \mathrm{p}) \propto-\log \sum_{\mathbf{x} \models \alpha} \prod_{i: \mathbf{x} \models X_{i}} \mathrm{p}{i} \prod{i: \mathbf{x} \models \neg X_{i}}\left(1-\mathbf{p}_{i}\right).
    \end{equation*}
\end{definition}

In the optimization process, the logic loss is calculated based on rules in the knowledge base and the training data, which includes both labeled and unlabeled data. The task-specific loss, on the other hand, is only calculated using the labeled data in the training set.

Since this paper focuses on NeSy systems based on loss functions, we do not consider other types of NeSy systems. Additionally, we present a method that does not rely on any information from the knowledge base, which we refer to as ``Vanilla'' in the following table. More details about the experiments and their configurations can be found in the appendix.

\subsubsection{Task 1 Addition Equations}\label{sec:exp:add}

Inspired by~\citet{MultiplexNet}, we constructed a semi-supervised-like task that uses a small number of labeled data and a large amount of well-structured unlabeled data to train a machine learning model.
Specifically, the data is structured by additive equations, as shown in \cref{fig:add_exp}.
This structure can be applied to datasets with ten classes, not just digit datasets.
In our experiments, we use MNIST~\citep{deng2012mnist}, FashionMNIST~\citep{xiao2017fmnist}, and CIFAR-10~\citep{krizhevsky2009cifar} as basic datasets, which we organize as illustrated in \cref{fig:add_exp}.
We use three-layer Multi-layer Perceptron (MLP) models for MNIST/FashionMNIST and ResNet9~\citep{he2016deep} for CIFAR-10. The task-specific loss for this task is CrossEntropy.
\begin{figure}[!htb]
    \centering
    \includegraphics*[width=.5\linewidth]{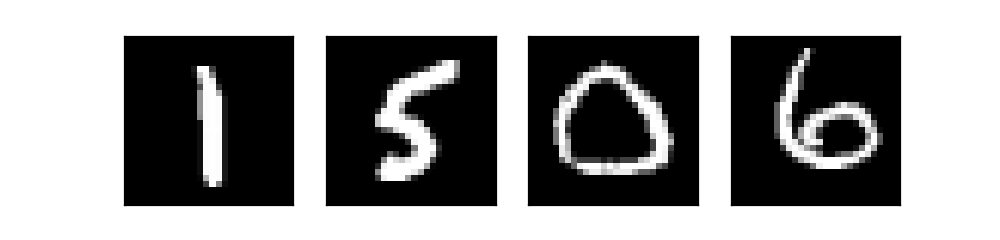}
    \captionsetup{font=small, labelfont=bf}
    \caption{Addition Equation Experiment, data are structured with equation constraints.
    All adopted equations in this experiment consist of four digits.
    For instance, in this picture, $1+5=06$.\label{fig:add_exp}}
\end{figure}

\bmhead{Knowledge Base} Addition equations with four digits have $10\times 10 = 100$  rules.
For example, rules $5+2=07,5+3=08$ can be represented as:
\begin{quote}
    \centering
    \begin{tabular}{cc}
        $\text{1stD5}(x_1) \land \text{2ndD2}(x_2) \rightarrow$ & $\text{3thD0}(x_3) \land \text{4thD7}(x_4)$   \\
        $\text{1stD5}(x_1) \land \text{2ndD3}(x_2) \rightarrow$ & $\text{3thD0}(x_3) \land \text{4thD8}(x_4). $ \\
    \end{tabular}
\end{quote}

Task 1 is well suited to validate the performance of NeSy systems in the scenario of incomplete knowledge bases since its structure allows for strong constraints to be imposed during learning.
Specifically, logic loss of this task can correct misclassified labels when most digits are recognized correctly.
This adaptability to varying amounts of labeled data also makes it suitable for scenarios with limited supervised data.

\subsubsection{Task 2 Hierarchical Classification}

CIFAR-100~\citep{krizhevsky2009cifar} consists of 100 classes and 20 super-classes (SC). For example, an image can be classified with a class label of \textit{maple} and a super-class label of \textit{trees}. In this experiment, we utilize the relationships between classes and super-classes. We adopt the WideResNet28-8~\citep{zagoruyko2016wide} model as the backbone, with two linear classification heads for the class label and super-class label. The task-specific loss for this experiment is CrossEntropy.

\bmhead{Knowledge Base} Knowledge rules in this task are relationships between classes and super-classes, such as:
\begin{quote}
    \centering
    \begin{tabular}{rl}
        $\text{Aquarium fish}(x)\rightarrow$ & $\text{Fishes}(x)$   \\
        $\text{Shark}(x)\rightarrow$         & $\text{Fishes}(x). $ \\
    \end{tabular}
\end{quote}

Task 2 has a weaker knowledge base than Task 1. In this task, even if the model correctly predicts the super-class, the knowledge base does not provide precise information on how to correct wrongly predicted sub-classes. Therefore, this task mainly focuses on scenarios with insufficient supervised data.

\subsection{Incomplete Knowledge Base}
When the knowledge base is incomplete, implication bias can be a significant issue for NeSy systems. This is because the model may not have access to all the relevant knowledge for making accurate reasoning and may instead rely more on vacuous truth. As a result, the model may perform poorly.

\begin{figure*}[!htb]
    \centering
    \captionsetup{font=small, labelfont=bf}
    \begin{subfigure}[t]{0.32\textwidth}
        \centering
        \includegraphics*[width=\textwidth]{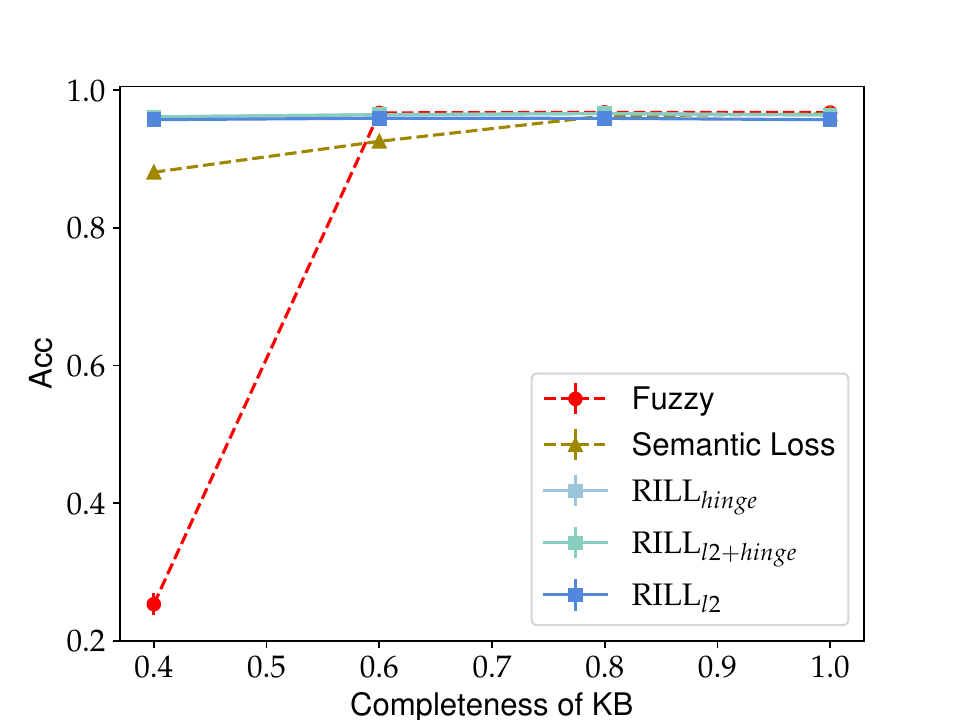}
        \caption{\small Add-MNIST\label{fig:add_mnist_kbs}}
    \end{subfigure}
    \begin{subfigure}[t]{0.32\textwidth}
        \centering
        \includegraphics*[width=\textwidth]{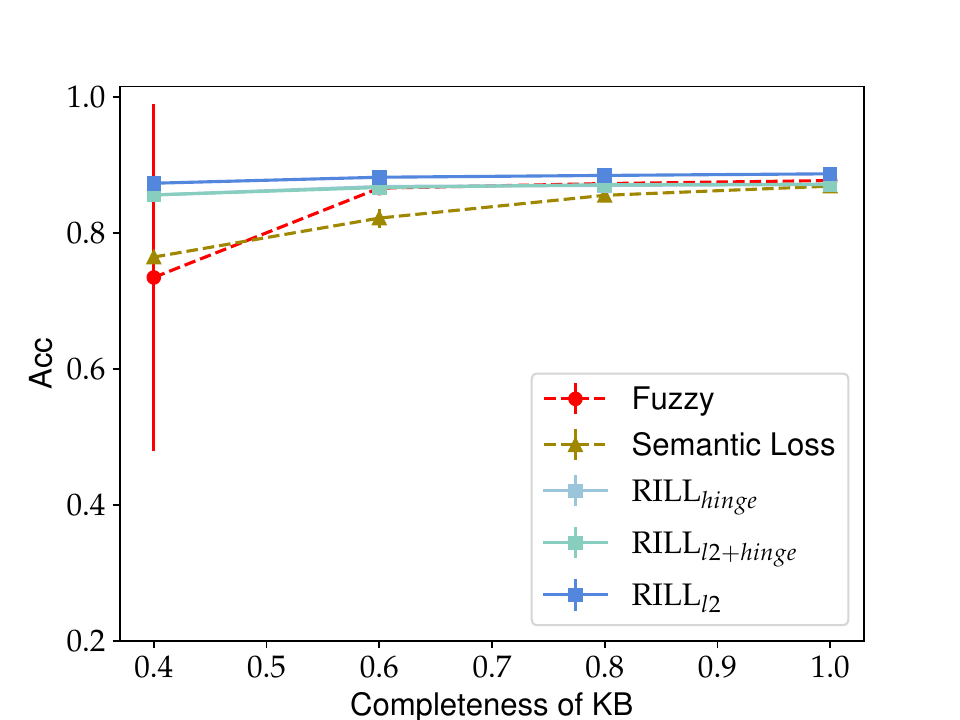}
        \caption{\small Add-FashionMNIST\label{fig:add_fmnist_kbs}}
    \end{subfigure}
    \begin{subfigure}[t]{0.32\textwidth}
        \centering
        \includegraphics*[width=\textwidth]{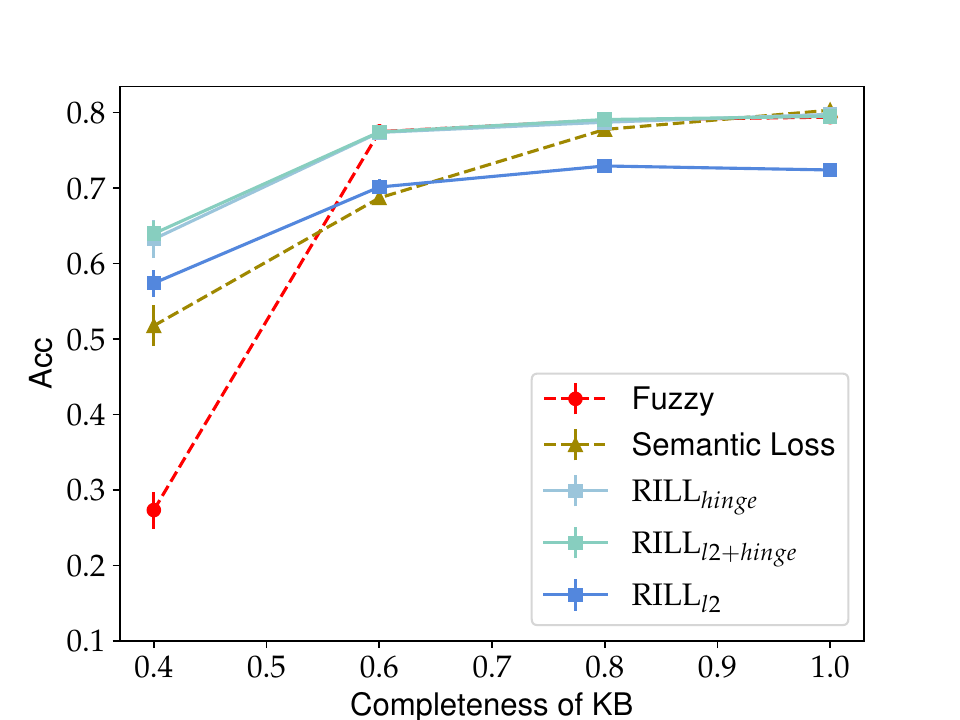}
        \caption{\small Add-CIFAR10\label{fig:add_CIFAR10_kbs}}
    \end{subfigure}
    \caption{\label{fig:add_kbs}\textbf{Effect of incomplete knowledge base}: {\small In this case, models may not have access to all the relevant knowledge for making accurate reasoning.}}
\end{figure*}
The completeness of knowledge bases ranges from 100\% to 40\%, i.e., the number of rules in knowledge bases varies from 100 to 40.
The size of the labeled dataset used in this task is 100 for MNIST and FashionMNIST and 2000 for CIFAR-10.
In Figure \ref{fig:add_kbs}, it is illustrated that as the degree of completeness decreases, the model's performance tends to decline as well.

\bmhead{Analysis} 
The RILL approach is more stable and performs better than other methods, particularly when the knowledge base is incomplete. This is because RILL assigns less importance to weak samples, which helps reduce the impact of implication bias. Thus, RILL is a useful approach for dealing with incomplete knowledge bases and mitigating the effects of implication bias.
\subsection{Insufficient Supervised Data}

Insufficient supervised data can hinder the NeSy system's ability to learn and make accurate predictions. This is because there may not be enough examples to learn from and generalize to new cases. Using loss functions that are prone to implication bias can exacerbate this problem. The experimental results for this scenario are shown in \cref{fig:add_lds}, \cref{tab:cifar10_size}, and \cref{tab:cifar100_size_acc}.
\begin{figure*}[!htb]
    \centering
    \captionsetup{font=small, labelfont=bf}
    \begin{subfigure}[t]{0.32\textwidth}
        \centering
        \includegraphics*[width=\textwidth]{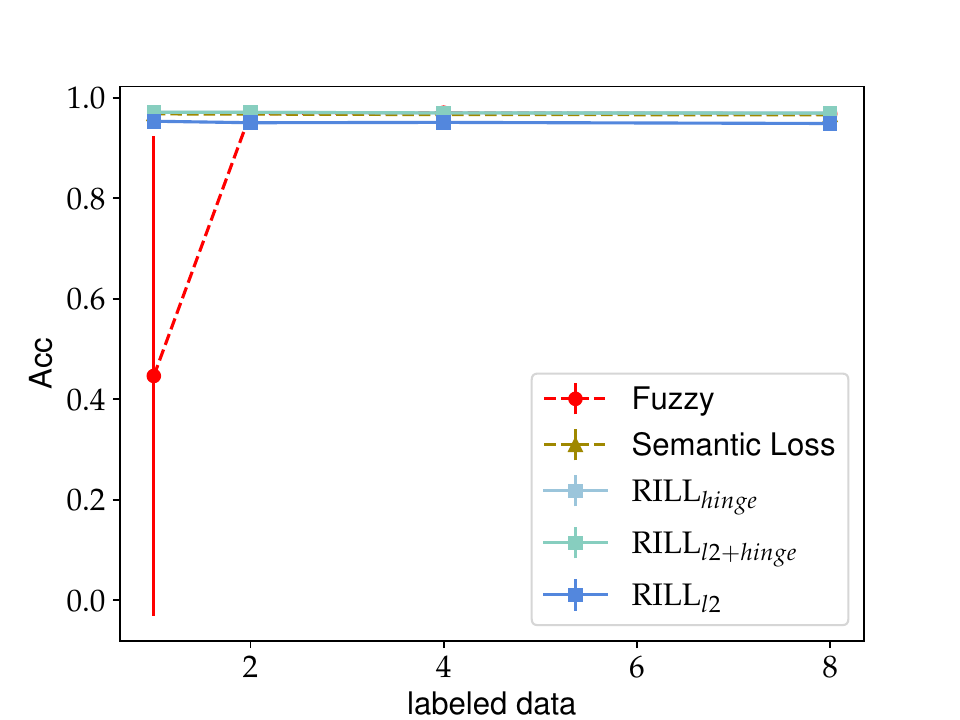}
        \caption{Add-MNIST\label{fig:add_mnist_ld}}
    \end{subfigure}
    \begin{subfigure}[t]{0.32\textwidth}
        \centering
        \includegraphics*[width=\textwidth]{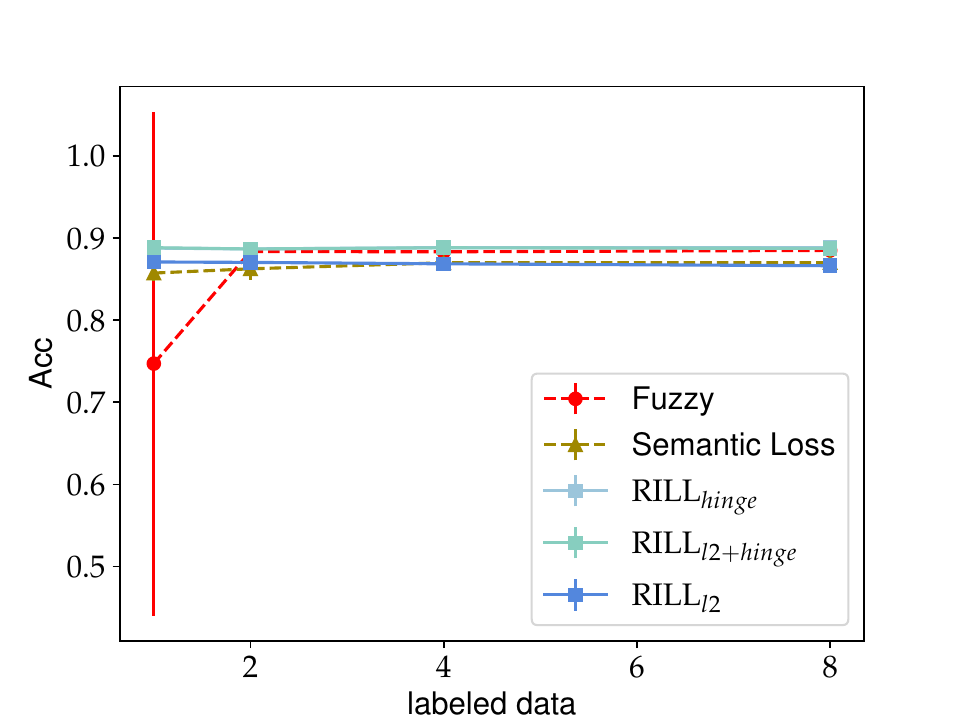}
        \caption{Add-FashionMNIST\label{fig:add_fmnist_ld}}
    \end{subfigure}
    \begin{subfigure}[t]{0.32\textwidth}
        \centering
        \includegraphics*[width=\textwidth]{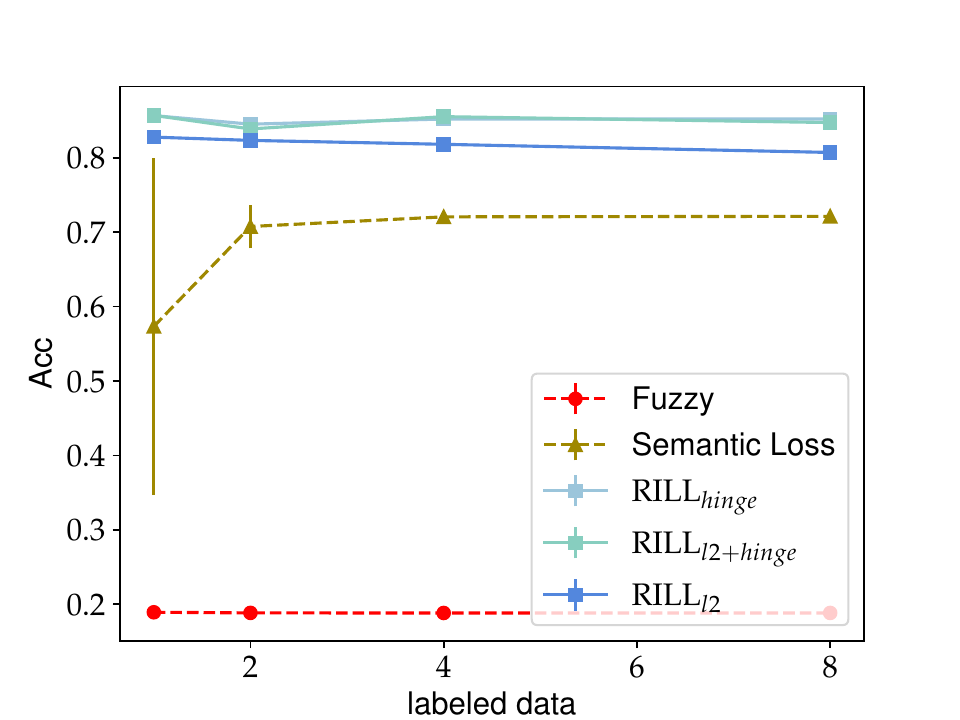}
        \caption{Add-CIFAR10\label{fig:add_CIFAR10_ld}}
    \end{subfigure}
    \caption{\label{fig:add_lds}\textbf{Effect of insufficient supervised data}: In this case, the system may not have enough examples to learn from and generalize to new cases}
\end{figure*}

\begin{table*}[!htb]
    \captionsetup{font=small, labelfont=bf}
    \centering
    \scalebox{1}{
        \begin{tabular}{lcccc}
            \toprule
            supervised data           & 8                         & 4                         & 2                         & 1                         \\
            \midrule
            Fuzzy                     & 0.1891$\pm$0.001          & 0.1906$\pm$0.004          & 0.1902$\pm$0.002          & 0.1904$\pm$0.001          \\
            SL                        & 0.7266$\pm$0.009          & 0.7200$\pm$0.003          & 0.7316$\pm$0.049          & 0.6785$\pm$0.024          \\
            $\textrm{RILL}{l2}$       & 0.8502$\pm$0.005          & \textbf{0.8560$\pm$0.003} & 0.8473$\pm$0.015          & \textbf{0.8582$\pm$0.004} \\
            $\textrm{RILL}{l2+hinge}$ & 0.8139$\pm$0.012          & 0.8202$\pm$0.004          & 0.8265$\pm$0.005          & 0.8316$\pm$0.006          \\
            $\textrm{RILL}_{hinge}$   & \textbf{0.8548$\pm$0.005} & \textbf{0.8573$\pm$0.011} & \textbf{0.8481$\pm$0.007} & \textbf{0.8581$\pm$0.003} \\
            \bottomrule
        \end{tabular}
    }
    \caption{\textbf{Effect of insufficient supervised data}: Task1, Add-CIFAR-10\label{tab:cifar10_size}, showing accuracies for different amounts of supervised data. Experiments were repeated five times.
    The boldface entries denote the best results, according to the Wilcoxon signed-rank test at the 5\% significance level.}
\end{table*}

\bmhead{Task 1} The size of the labeled dataset decreases from eight to one sample per class. As shown in \cref{fig:add_lds}, the fuzzy logic loss struggles to improve the model's performance when the dataset is small, due to the difficulty of making accurate predictions and the effects of implication bias, particularly in the more challenging Add-CIFAR-10 task~(c.f., \cref{tab:cifar10_size}). However, RILL performs much better because it assigns less weight to weak samples, which helps mitigate the impact of implication bias. It's worth noting that RILL was able to achieve an 86\% accuracy on CIFAR-10 using only one labeled sample per class.

\begin{table*}[!htb]
    \centering
    \captionsetup{font=small, labelfont=bf}
    \scalebox{1}{
        \begin{tabular}{lcc|cc}
            \toprule
            \multirow{2}{*}{supervised data} & \multicolumn{2}{c}{10}    & \multicolumn{2}{c}{1}                                                             \\
                                             & Acc                       & SC-Acc                    & Acc                       & SC-Acc                    \\
            \midrule
            Fuzzy                            & 0.4449$\pm$0.010          & 0.7585$\pm$0.014          & 0.2291$\pm$0.004          & 0.7484$\pm$0.014          \\
            SL                               & 0.4310$\pm$0.034          & 0.7462$\pm$0.037          & 0.2281$\pm$0.007          & 0.7384$\pm$0.021          \\
            Vanilla                          & 0.4520$\pm$0.004          & 0.7670$\pm$0.010          & 0.2297$\pm$0.007          & 0.7415$\pm$0.014          \\
            $\text{RILL}_{l2+hinge}$         & 0.4509$\pm$0.002          & 0.7649$\pm$0.008          & \textbf{0.2363}$\pm$0.009 & \textbf{0.7575}$\pm$0.004 \\
            $\text{RILL}_{l2}$               & \textbf{0.4538}$\pm$0.003 & 0.7699$\pm$0.005          & \textbf{0.2357}$\pm$0.010 & 0.7449$\pm$0.014          \\
            $\text{RILL}_{hinge}$            & \textbf{0.4562}$\pm$0.004 & \textbf{0.7750}$\pm$0.006 & 0.2334$\pm$0.006          & 0.7463$\pm$0.014          \\
            \bottomrule
        \end{tabular}
    }
    \caption{\textbf{Effect of insufficient supervised data}: Task2, Hierarchical-Classification \label{tab:cifar100_size_acc} showing accuracies for different amounts of supervised data. Experiments were repeated five times. The boldface entries denote the best results  according to the Wilcoxon signed-rank test at the 5\% significance level.}
\end{table*}
\bmhead{Task 2} The size of the class-labeled dataset decreases from ten to one sample per class. As shown in \cref{tab:cifar100_size_acc}, RILL maintains relatively high performance (both acc and sc-acc) and stability (indicated by std) compared to other methods, particularly when the labeled dataset is small. This highlights the importance of reducing implication bias.

\bmhead{Remark}
One may be surprised to see that the performance of our model slightly improved as the size of the labeled data decreased (see \cref{fig:add_lds}). This may be similar to the smooth label effect discussed in \cite{muller2019does}. However, this phenomenon only appears in Task 1 and not in Task 2 because the knowledge base used in Task 2 is weaker and does not provide precise instructions for correcting misclassified sub-classes. This is likely why RILL is less effective in Task 2.

\section{Discussion and Limitation}
\bmhead{Discussion}
The results of the experiments show that $\textrm{RILL}_{l2}$ performs worse than the other types of $\textrm{RILL}$. This is because the $l2$ aggregator gives equal weight to all samples by multiplying their loss values, while the hinge and l2+hinge aggregators use a hard threshold to reduce the influence of weak samples. Therefore, the latter two aggregators can reduce bias more effectively if an appropriate threshold is chosen.

Implication bias may not be a problem in NeSy systems if the following conditions are met: \emph{complete knowledge base}, \emph{sufficient supervised data}, and \emph{robust training methods}. The extent to which implication bias affects a NeSy system will depend on the task and training methods. In general, it is important to consider the potential impact of implication bias and take steps to mitigate it if necessary.

\bmhead{Limitation}
This work has two limitations. First, it only computes the logic loss from each individual logic rule in the knowledge base and does not consider the complex interactions between different rules. Future work could involve measuring complex reasoning processes using loss functions. Second, the discussion of weak samples requires further investigation. This topic is related to learning with noisy labels~\citep{natarajan2013learning}, and identifying these samples remains a key problem in this field.

\section{Conclusion}
This paper analyzes implication bias, which is a tendency for NeSy systems to shortcut the logic of implication rules by negating the premise. The paper discusses the cause and negative effects of implication bias and confirms its existence through experiments. We propose Reduced Implication-bias Logic Loss (RILL) as a solution. RILL reduces the uncertainty of negative information by lowering the importance of weak samples and causing the model to pay more attention to relevant samples. Empirical studies show that RILL can improve performance and increase robustness compared to other forms of logic loss, especially in cases of incomplete knowledge bases or insufficient supervised data.

\begin{appendices}
    \section{Discussion with other kinds of fuzzy operators}
    In this section, we investigate different types of fuzzy operators and experimentally validate their effectiveness while maintaining the implication biased. When selecting fuzzy operators for NeSy, it is crucial to consider their smoothness and ease of optimization. Therefore, we concentrate on commonly used fuzzy operators that possess these desirable properties. Fuzzy operators that do not have a smooth gradient will be disregarded in our analysis.
    \subsection{Analysis}
    \bmhead{Sigmoidal}
    \citet{van_krieken_survey_2022} proposed an operator which is smoothed Reichenbach operator by a sigmoid function.
    This implication likelihood function is defined as follows:

    \begin{equation*}
        \sigma_{I}(p,q)=  \frac{1+e^{-s\left(1+b_{0}\right)}}{e^{-b_{0} s}-e^{-s\left(1+b_{0}\right)}} \cdot
        \left(\left(1+e^{-b_{0} s}\right) \cdot \sigma\left(s \cdot\left(I(p,q)+b_{0}\right)\right)-1\right),
    \end{equation*}

    where $s>0,b_0\in\mathbb{R},\sigma(\cdot) = \frac{1}{1+e^x}$ denotes the sigmoid function.
    Substituting $d=\frac{1+e^{-s\left(1+b_{0}\right)}}{e^{-b_{0} s}-e^{-s\left(1+b_{0}\right)}}, h =(1+e^{-s\cdot b_0}),f=\sigma \left(s \cdot\left(I(p,q)+b_{0}\right)\right)$, we find:

    \begin{equation*}
        \frac{\partial \sigma_{I}(p,q)}{\partial I(p,q)}=d \cdot h \cdot s \cdot f\cdot(1-f)>0.
    \end{equation*}

    That is to say when $I(p,q)$ is $\delta$-Confidence Monotonic, $\sigma_I(p,q)$ will be $\delta$-Confidence Monotonic too.
    This means logic loss derived from $\sigma_I$ will become implication biased.

    \bmhead{Łukasiewicz}
    Łukasiewicz implication likelihood~\citep{Cignoli2007} was defined as follows:

    \begin{equation*}
        I_{LK}(p,q) = \min(1-p+q,1).
    \end{equation*}

    It is easy to calculate the gradient of this likelihood,
    $\frac{\partial I_{LK}}{\partial p} = -1 \cdot \mathbb{I}[p>q]$.
    It turns out this is also implication biased.

    \bmhead{G$\ddot{\text{o}}$del} G$\ddot{\text{o}}$del implication likelihood~\citep{Paad2016RelationB} was defined as follows:

    \begin{equation*}
        I_{G}(p,q) = \max(1 - p, q).
    \end{equation*}

    Also, the gradient of this likelihood $\frac{\partial I_{G}}{\partial p} = -1 \cdot \mathbb{I}[p+q<1]$ indicates this operator is implication biased.

    \bmhead{Nilpotent} Nilpotent implication likelihood~\citep{Gerla2011} was defined as follows:

    \begin{equation*}
        I_{N}(p,q) = \left\{
        \begin{aligned}
             & 1,           & \mathrm{if}\quad p\leq q. \\
             & \max(1-p,q), & \mathrm{otherwise}.
        \end{aligned}
        \right.
    \end{equation*}

    Also, the gradient of this likelihood $\frac{\partial I_{N}}{\partial p} = -1 \cdot \mathbb{I}[(p+q<1)\&(p > q)]$ indicates this operator is implication biased.

    \subsection{Empirical Study}
    \begin{figure}[!htb]
        \centering
        \captionsetup{font=small, labelfont=bf}
        \begin{subfigure}[t]{0.43\textwidth}
            \centering
            \includegraphics*[width=\linewidth]{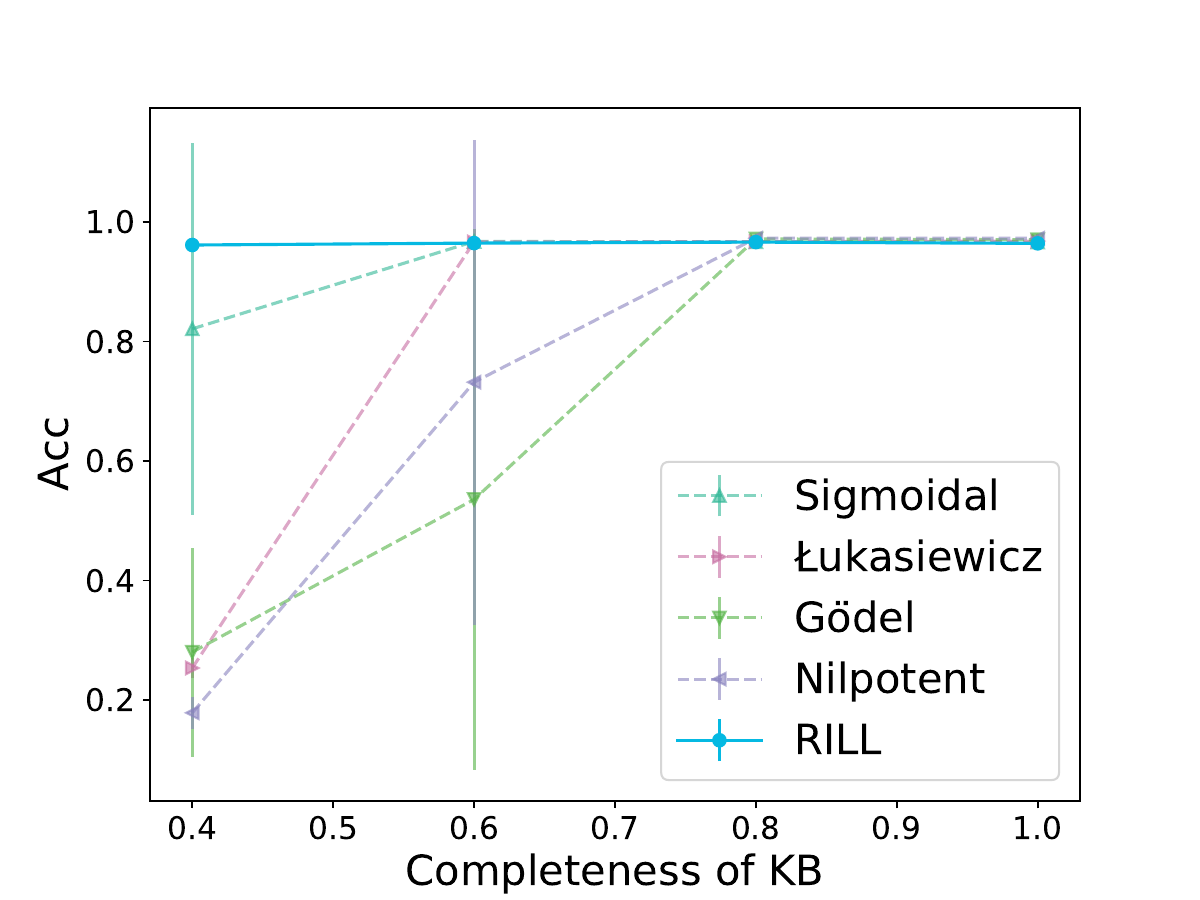}
            \caption{ Incomplete knowledge base\label{app:fig:mnist_fuzzy}}
        \end{subfigure}
        \begin{subfigure}[t]{0.43\textwidth}
            \centering
            \includegraphics*[width=\linewidth]{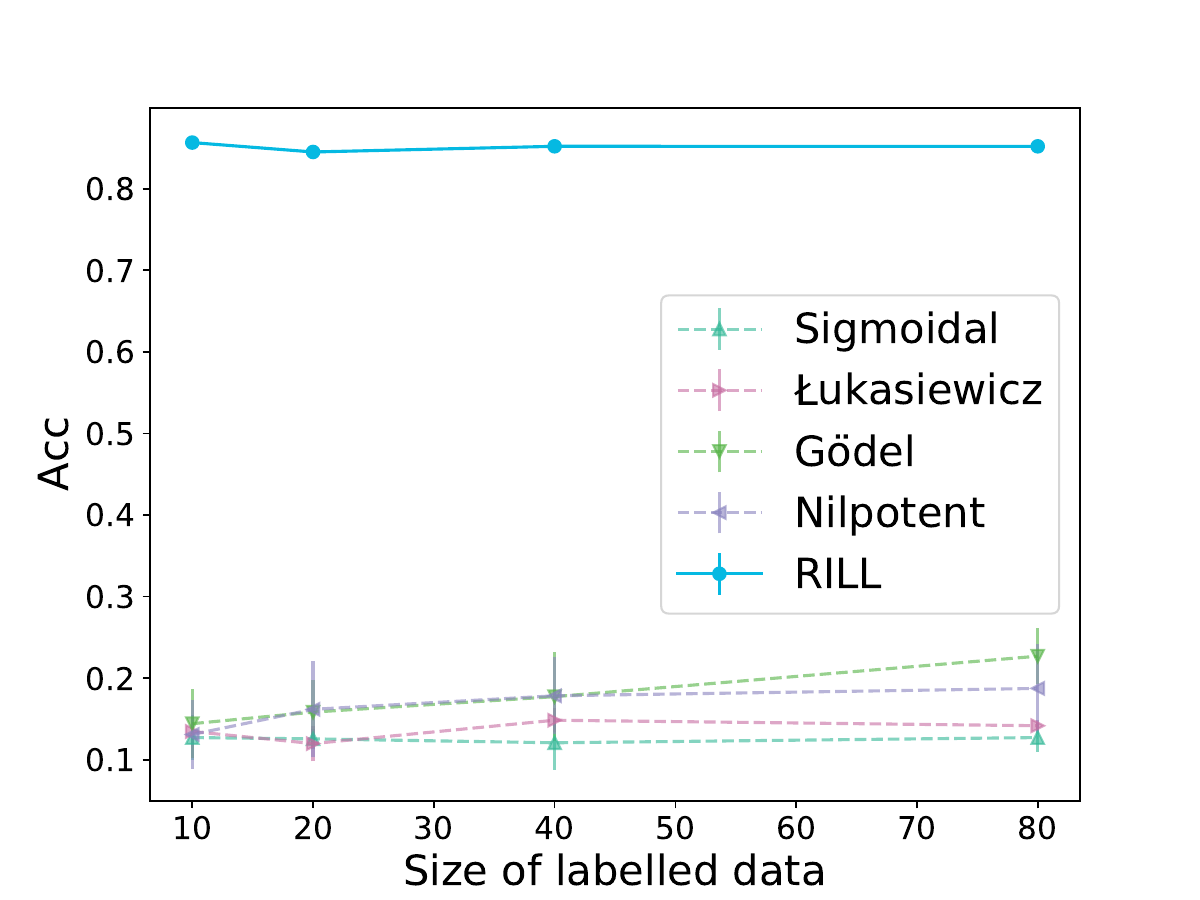}
            \caption{ Insufficient supervised data\label{app:fig:cifar_fuzzy}}
        \end{subfigure}
        \caption{Verification of implication bias across various fuzzy operators is crucial. The empirical evidence illustrated in the preceding images substantiates our assertion that this phenomenon is pervasive.\label{app:fig:fuzzy_op}}
    \end{figure}
    
In this section, we present the empirical results of the above-analyzed operators for validation, with a particular emphasis on incomplete knowledge bases~(especially Add-MNIST) and insufficient labeled data~(especially Add-CIFAR10) scenarios. All experiments follow the same settings used in \cref{sec:exp:add}.

As shown in \cref{app:fig:fuzzy_op}, implication bias significantly harms the performance of the model, especially when the knowledge base is incomplete or the amount of supervised information is not sufficient, which further supports our above analysis.

    \section{Discussion about Clark's Completion}
    RILL and Clark's completion both aim to reduce the uncertainty of negative information. While RILL does not alter the information in the knowledge base, but instead reduces the importance of weak samples, causing the model to pay more attention to samples that are more relevant to the given rule.
    \begin{figure}[!hbt]
        \centering
        \includegraphics*[width=.5\linewidth]{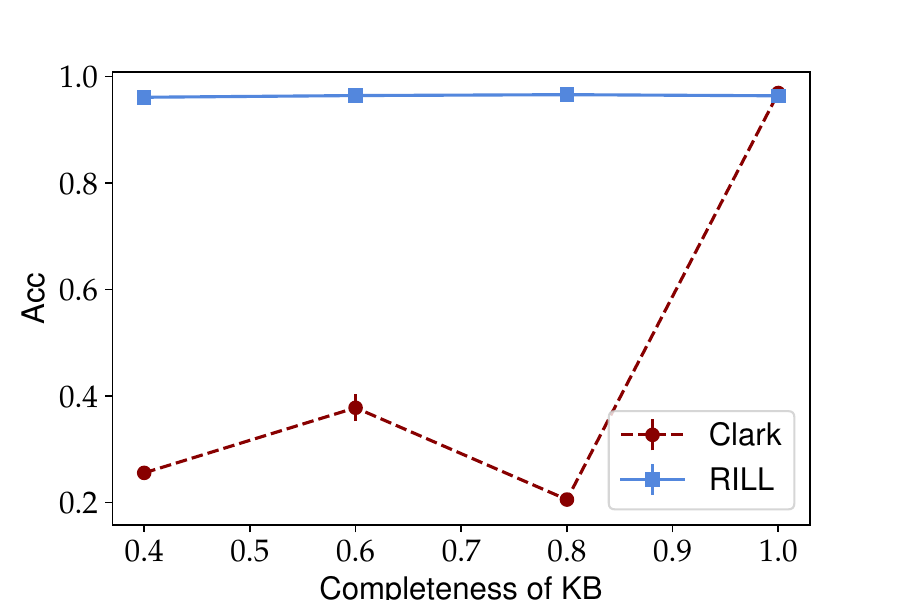}
        \captionsetup{font=small, labelfont=bf}
        \caption{Performance of Clark's Completion drops when the completeness of KB is decreasing.\label{app:fig:clark}}
    \end{figure}

    Surprisingly, explicitly applying Clark's completion in a NeSy system may not be helpful.
    There are two reasons behind this claim.

    First, a explicitly Clark's completion need to replace $\rightarrow$ to $\leftrightarrow$.
    For example:

    \begin{equation*}
        A\leftarrow A_1, A\leftarrow A_2,\cdots, A\leftarrow A_n,
    \end{equation*}

    will be replaced as $A\leftrightarrow (A_1\lor A_2\lor \cdots \lor A_n)$.

    However, the fuzzy operator is unsuitable for approximating a rule with many atoms~\citep{DBLP:journals/corr/abs-2108-11451}. An example is the n-ary Łukasiewicz strong disjunction $F_{\lor}(x_1,\cdots,x_n) = \min(1,x_1+\cdots+x_n)$.
    Although all $x_i$ can be very small, this approximation of disjunction will give a value near 1.
    Because Clark's completion will increase the number of atoms in the logical rule, the knowledge base may suffer from this problem after completion.

    Second, in a NeSy system, if the knowledge base is incomplete, replace $\rightarrow$ to $\leftrightarrow$
    will change the information in the knowledge base, which may induce wrong information.
    In logic programming, Clark's completion will not change the soundness of the system, while in the NeSy setting with a data-driven approach, it may not be promised.

    Here we adopt the same experimental setting of incomplete knowledge base case in \cref{sec:exp:add} to validate the performance of Clark's Completion.
    As depicted in \cref{app:fig:clark}, when the knowledge base becomes incomplete, completion of the knowledge base will not help improve the model's performance because it introduces wrong information.

    \section{Sensitivity Analysis}
    Both $\text{RILL}_{hinge}$ and $\text{RILL}_{l2+hinge}$ contain a hyper-parameter $\epsilon$ in their definition. In this section, we investigate the sensitivity of $\epsilon$ and its impact on the model's accuracy in the Add-MNIST experiment when the knowledge base incompleteness is 40\%. \cref{fig:sa} displays the relationship between $\epsilon$ and the model's accuracy.

    The results indicate that when the threshold $\epsilon$ decreases to a certain value (in \cref{fig:sa}, it is 0.001), the model's performance drops, suggesting the existence of a sensitive region that could be related to weak samples in addressing the problem of implication bias. When the threshold $\epsilon$ is above this value, the performance of RILL is stable but slightly decreasing. This decrease may be due to the increased number of weak samples, which results in the loss of some useful information.
    \begin{figure}[!htb]
        \centering
        \begin{subfigure}[t]{.45\linewidth}
            \centering
            \includegraphics*[width=0.8\linewidth]{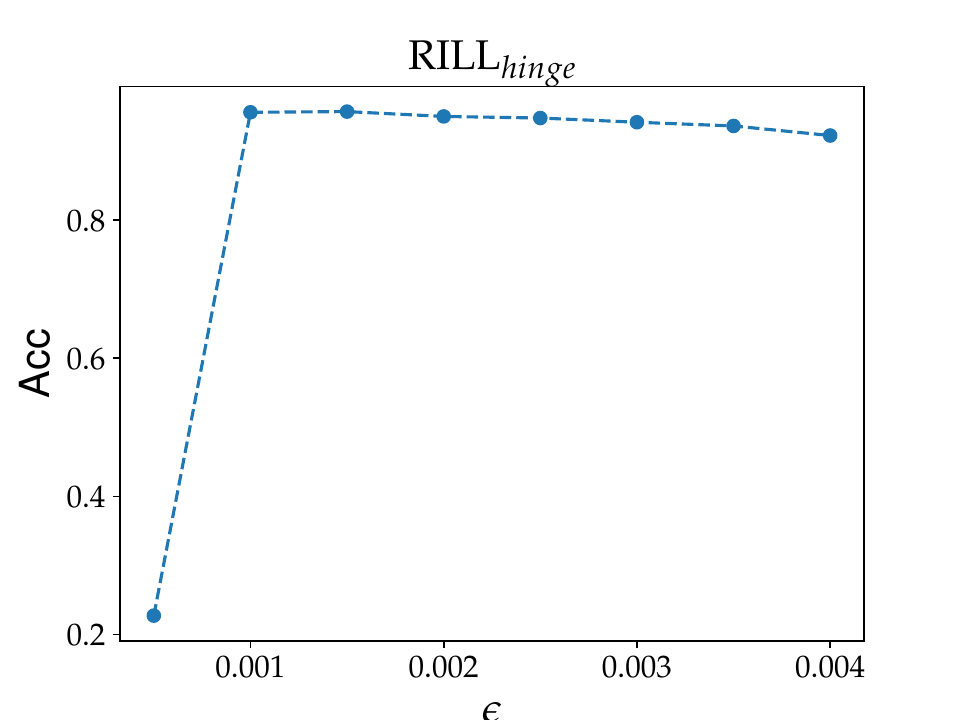}
            \caption{ \label{fig:sa_mnist_100_40_u}$\text{RILL}_{hinge}$}
        \end{subfigure}
        \quad
        \begin{subfigure}[t]{.45\linewidth}
            \centering
            \includegraphics*[width=0.8\linewidth]{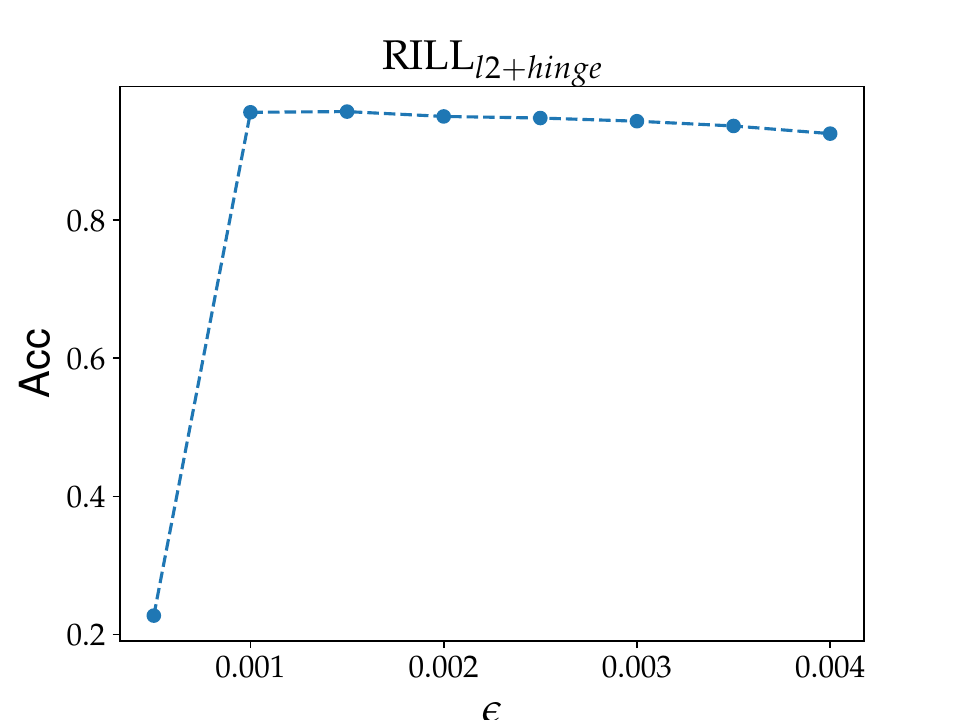}
            \caption{ \label{fig:sa_mnist_100_40_s}$\text{RILL}_{l2+hinge}$}
        \end{subfigure}
        \caption{Sensitivity Analysis of the hyper-parameter $\epsilon$.\label{fig:sa}}
    \end{figure}
    \section{Details of Experiments}
    In this section, we will provide more details of our experiments.
    \bmhead{Implementation of Semantic Loss}
    The formal definition of Semantic Loss requires the conversion of CNF (conjunctive normal form) into SDD (sentential decision diagram), which can become impractical when dealing with knowledge bases that use a large number of predicates and complex rules due to the heavy computational burden.
    To address this issue, we propose an alternative approach where each rule in the knowledge base is converted separately into wmc (weighted model counting) models and then combined at the end. Although explicit repetitive terms are reduced during the combining phase, detecting implicit repetitive terms is computationally expensive, and thus they remain unchanged. This approach enables us to implement Semantic Loss efficiently while still accounting for the complexity of the knowledge base.

    However, even with this optimization, the cost of using Semantic Loss is still much higher than that of using RILL, with a cost that is around three times higher. As a result, Semantic Loss may not be practical to use in many cases, particularly when dealing with large or complex knowledge bases.

    \bmhead{Details of Task 1}
    The backbone for both MNIST and FashionMNIST datasets is a three-layer Multilayer Perceptron (MLP) with a width of each layer being [256,512,10], and the activation function is Rectified Linear Unit (ReLU)~\citep{maas2013rectifier}. In contrast, the backbone for CIFAR10 is ResNet9~\citep{he2016deep}.
    The value of $\lambda$ for both Fuzzy and RILL logic loss is 0.7. For Semantic Loss, we sample $\lambda$ from $\{0.001,0.005,0.01,0.05,0.1,0.5\}$ and choose the best one, which is 0.5 in this experiment.
    The learning rate is set to 0.0001, with a decay rate of 0.7. The learning rate scheduler is set to StepLR, with a decay step of 60. The optimizer used is AdamW as default, and the weight decay rate is set to 5e-4.

    \bmhead{Details of Task 2}
    For this task, we select WideResNet-28-8~\citep{zagoruyko2016wide} as the backbone architecture with two classification heads, one for \textit{class} classification, and the other for \textit{super-class} classification.
    The value of $\lambda$ for both Fuzzy and RILL logic loss is 0.002, and for Semantic Loss, we still choose 0.5 as the default value.
    The learning rate is set to 0.005, with a decay rate of 0.9 and decay steps equal to 45. The learning rate scheduler is set to StepWithWarmUp, and the warm-up epoch is set as 5. The optimizer used is set as default, and momentum is set to 0.9.

\bmhead{Availability of data and material} All datasets are publicly available.
\bmhead{Code availability} Available at \href{https://git.nju.edu.cn/Alkane/clion.git}{https://git.nju.edu.cn/Alkane/clion.git}.
\bmhead{Author's contributions} H conceived the central idea of this paper and contributed to the writing and execution of the main experiments. D contributed to the enhancement of the RILL approach and the refinement of this paper. L provided valuable feedback, suggestions, and editing services for this manuscript.
\section*{Declarations}
\bmhead{Funding} Not applicable.
\bmhead{Conflicts of interest} Not applicable.
\bmhead{Ethics approval} Not applicable.
\bmhead{Consent to participate} Not applicable.
\bmhead{Consent for publication} Not applicable.

\end{appendices}
\bibliographystyle{plainnat}
{
\bibliography{strings,ref}
}
\end{document}